\documentclass[11pt]{article}

\usepackage[final]{acl}

\usepackage{times}
\usepackage{latexsym}

\usepackage[T1]{fontenc}
\usepackage[utf8]{inputenc}
\usepackage{microtype}
\usepackage{inconsolata}

\usepackage{graphicx}
\graphicspath{{../}{./}}

\usepackage{amsmath}
\usepackage{amssymb}
\usepackage{mathtools}
\usepackage{amsthm}

\usepackage{multirow}
\usepackage{makecell}
\usepackage[svgnames]{xcolor}
\definecolor{lightyellow}{RGB}{255,255,224} 
\definecolor{abel}{RGB}{25,255,224}
\usepackage[framemethod=TikZ]{mdframed}
\usepackage{graphicx}
\usepackage{listings}
\usepackage{wrapfig}
\usepackage{amsmath,amssymb,amsthm}

\usepackage[table]{xcolor}
\definecolor{lightblue}{RGB}{230,245,255}

\theoremstyle{plain}
\newtheorem{theorem}{Theorem}[section]

\newtheorem{lemma}[theorem]{Lemma}

\theoremstyle{definition}

\theoremstyle{remark}

\usepackage{microtype}
\usepackage{graphicx}
\usepackage{subcaption}
\usepackage{booktabs}

\newcommand{\trainable}[1]{\textcolor{DarkGreen}{#1}}
\title{Bi-Phase Training: Learning Efficiently From Scratch}

\author{Abel Gurung \\
  Purdue University \\
  \texttt{gurung1@purdue.edu} \\\And
  Joseph Campbell \\
  Purdue University \\
\texttt{joecamp@purdue.edu} \\}
\begin{document}
\maketitle
{\renewcommand{\thefootnote}{}\footnotetext{Accepted to Findings of AACL-IJCNLP 2026.}}
\begin{abstract}
Pre-training foundational models is computationally expensive as models scale. In this work, we introduce Bi-Phase Training (BPT), a parameter-efficient pre-training method that preserves the high-rank learning dynamics observed in full-parameter optimization with substantially reduced trainable parameters. BPT parameterizes each weight update as a composition of constrained high-rank left/right transformations via trainable diagonal matrices and low-rank adapters whose updates are periodically merged into the frozen base weights, resulting in a compounding expansion of accessible directions during training. Across language-model pre-training from scratch (Qwen2.5 models from 100M to 1.5B parameters and OLMo-2-1B) on FineWeb-Edu (10B and 100B tokens), BPT achieves validation perplexity close to full-parameter baselines while using far fewer trainable parameters—e.g., at the 1.5B scale, BPT uses 66\% fewer trainable parameters with only a small validation-perplexity gap. Additionally, a comprehensive evaluation across diverse downstream tasks shows that BPT preserves much of the full-parameterized model's downstream performance.

\end{abstract}

\section{Introduction}
\label{sec:intro}

Foundational models have demonstrated impressive general-purpose performance \cite{deepseekai2025deepseekv3technicalreport, grattafiori2024llama3herdmodels}. These models usually consist of billions of parameters, are trained on massive datasets, and have become the standard in modern deep learning. Empirical studies such as scaling laws have demonstrated that increasing model size generally leads to lower training loss \cite{kaplan2020scalinglawsneurallanguage, hoffmann2022trainingcomputeoptimallargelanguage}. However, training larger models requires significantly more compute, highlighting the need for more compute-efficient pre-training methods.

\begin{figure}[t]
  \centering
  \includegraphics[width=\linewidth,keepaspectratio]{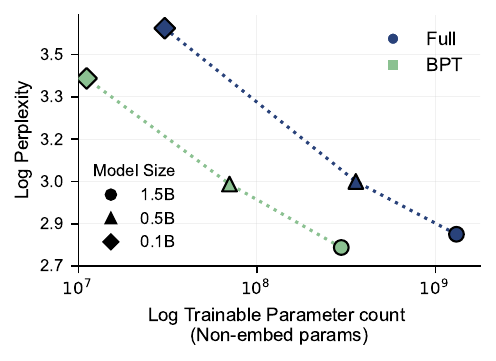}
  \caption{Evaluation log-perplexity vs. log trainable parameters (non-embedding) across model sizes. BPT approaches the fully-parameterized baseline while using far fewer trainable parameters; on the 1.5B model it achieves comparable eval loss with 66\% fewer trainable parameters}
  \label{fig:log_log_scale}
\end{figure}

During training, a given trainable parameter incurs a fixed memory overhead: the parameter itself ($n$), its gradient ($n$), and two optimizer states ($2n$) when using Adam family of optimizers \cite{kingma2017adammethodstochasticoptimization, loshchilov2019decoupledweightdecayregularization}. While total training memory depends on several factors such as batch size and activation footprint \cite{chen2016trainingdeepnetssublinear}, this $4n$ overhead remains fixed per trainable parameter. Distinguishing between non-trainable and trainable parameters for pre-training is important, and the total number of parameters can be interpreted as the model's representational capacity. Our central claim is that effective pre-training does not require optimizing every parameter. Instead, models can retain much of the learning behavior of full training while updating only a small subset of parameters, provided that the resulting updates preserve the essential training dynamics. We first analyze the general learning dynamics of full-parameter training in terms of the rank of the update for SGD and Adam optimizers, showing that in full-parameter training, updates are high-rank, and verify this behavior on a simple model. Then, to preserve the learning dynamics of high-rank while being parameter-efficient, we introduce Bi-Phase Training (BPT), a novel parameter-efficient method for pre-training foundation models that updates a weight matrix $\mathrm{W}$ by simultaneously applying constrained high-rank and low-rank updates. This work directly builds on prior works such as LoRA \cite{hu2021loralowrankadaptationlarge}, ReLoRA \cite{lialin2023relorahighranktraininglowrank}, and HyperAdapt \cite{gurung2025hyperadaptsimplehighrankadaptation} and combines them in pre-training setting to achieve high-rank updates at every optimization step.

The claim that a high-rank update is necessary is consistent with prior evidence. ReLoRA \cite{lialin2023relorahighranktraininglowrank} shows that allowing a brief period of full-parameter training before switching to low-rank adaptation improves optimization, suggesting early learning benefits from richer (higher-rank) update directions \cite{lialin2023relorahighranktraininglowrank}. Complementarily, systematic ablations that increase LoRA rank $r$ toward full-rank updates report that doing this matches full training performance when the model is trained from scratch, reinforcing the link between update rank and training performance \cite{huh2024trainingneuralnetworksscratch}.

Empirically, we demonstrate the effectiveness of our method by pre-training three language models of varying sizes, showing consistent performance across scales. We also perform a comprehensive downstream evaluation of the trained models to show that there is minimal performance loss when using fewer trainable parameters than when using all trainable parameters. Our main contributions are as follows:

\begin{itemize}
  \item We introduce Bi-Phase Training (BPT) to significantly reduce the number of trainable parameters required during pre-training.
  \item We provide a theoretical upper-bound to the rank of the update induced by BPT.
  \item We comprehensively test the method on downstream evaluation tasks and show that there is minimal performance difference between the fully parameterized training and BPT.
\end{itemize}

\section{Background}
\label{background}

\paragraph{Full Parameterized Training}

\begin{figure*}[t]
  \centering
  \includegraphics[width=1\linewidth]{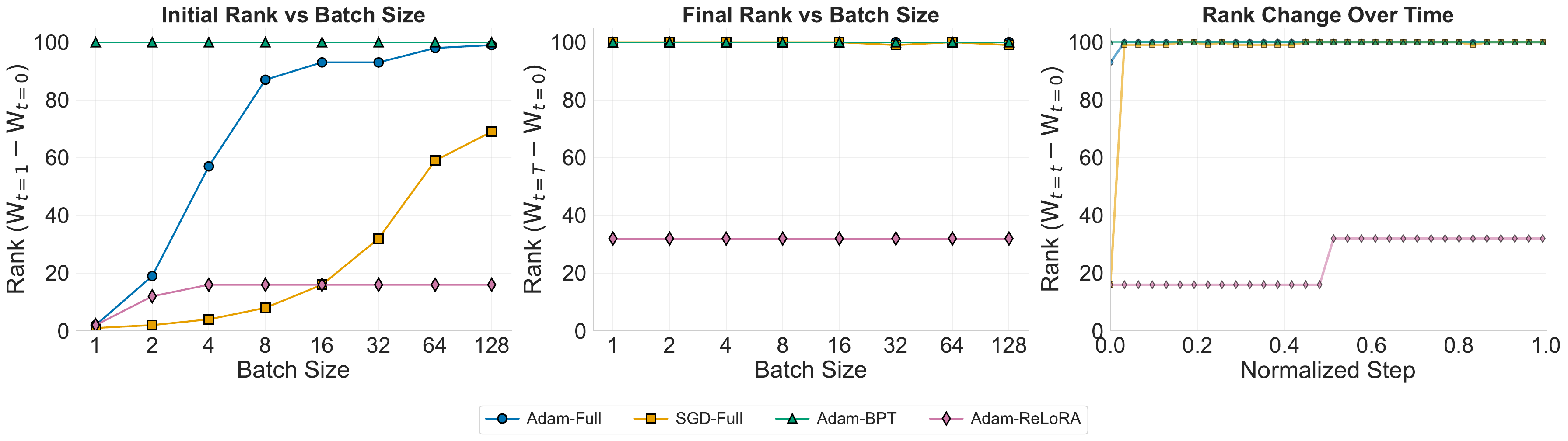}
  \caption{Update-rank trend across batch sizes and methods. {Left:} rank change after the first update step versus batch size, showing scaling effects for SGD and consistently high-rank updates for Adam-full and BPT. {Middle:} rank at the end of training ($t=T$) versus batch size, where full-parameter methods and BPT reach near-full rank even from small batches, while ReLoRA remains rank-limited. {Right:} rank evolution over normalized training steps for batch size 16, comparing Adam-full, SGD-full, BPT, and ReLoRA.}
  \label{fig:rank_plot_adam}
\end{figure*}

During neural network training, we aim to find weight matrices $\mathrm W \in \mathbb{R}^{n \times m}$ that minimizes a loss function, $L$. The matrix is updated iteratively using gradient signals from backpropagation. The rank of the update $\Delta \mathrm{W}$ indicates the number of independent update directions applied per step to the weight matrix during optimization. Formally, we want to optimize the weight matrix $\mathrm W$ using $\Delta \mathrm{W}$:
$
\mathrm{W'} = \mathrm{W} + \Delta \mathrm{W}
$
where $\mathrm{W'}$ is the optimized weight matrix. For a single training example with an input vector $x \in \mathbb{R}^m$ and a corresponding layer output $y \in \mathbb{R}^n$, the gradient of the loss with respect to the weight matrix $G$ is the outer product of the upstream gradient and the input vector's transpose:
$$
G = \frac{\partial L}{\partial \mathrm{W}} = \left(\frac{\partial L}{\partial y}\right) x^T
$$
Since this is an outer product of two vectors, its rank is at most one, meaning that one training data point gives rank-one information. In stochastic gradient descent (SGD), where only the gradient of the matrix is used for the update, $\Delta \mathrm W$, the rank of $\Delta \mathrm W$ for a single example is at most one. For a mini-batch of $B$ training examples, we can stack the inputs into a matrix $\mathrm X \in \mathbb{R}^{m \times B}$ and the corresponding upstream gradients into a matrix $\Lambda \in \mathbb{R}^{n \times B}$. The gradient for the entire mini-batch is then given by the matrix product:
$$
G = \frac{1}{B} \Lambda \mathrm X^T = \frac{1}{B} \sum_{i=1}^B \lambda_i x_i^T
$$
The rank of a matrix product is less than or equal to the minimum rank of its factors. Since the rank of $\Lambda$ is at most $\min(n, B)$ and the rank of $\mathrm X^T$ is at most $\min(B, m)$, the rank of the resulting gradient matrix is bounded by:
$$
\operatorname{rank}(G) \le \min(\operatorname{rank}(\Lambda), \operatorname{rank}(\mathrm X^T))
$$
Without loss of generality, let $B \le \min(n,m)$. For the update $\Delta \mathrm W$ in SGD, the $\operatorname{rank}(\Delta \mathrm W)$ is at most $B$. This means that the update rank's upper-bound grows linearly with the batch size for SGD. Additionally, adding momentum or using Adam has a different upper-bound for the update rank even at smaller batch size, refer to \autoref{sec:adams_update_rank} for more on momentum and Adam's rank.

\paragraph{Parameter Efficient Training} Full parameterized training can induce high-rank updates. However, this requires a large number of trainable parameters which is computationally expensive. In this section, we introduce parameter-efficient methods to train models while reducing the number of trainable parameters. To reduce the number of trainable parameters in $\mathrm W$, we want to express $\Delta \mathrm W$ with minimal number of trainable parameters while having high-rank such that the trainable parameter reduction does not alter its learning dynamics.

In LoRA \cite{hu2021loralowrankadaptationlarge}, $\Delta \mathrm{W}$ is defined as the product of two low-rank matrices $\mathrm{B}\in \mathbb{R}^{n \times r}$ and $\mathrm{A} \in \mathbb{R}^{r \times m}$ such that $\Delta \mathrm{W} = \alpha \mathrm{BA}$, where $\alpha$ is a scaling factor. LoRA trains the low-rank matrices, $\mathrm{B}$ and $\mathrm{A}$, which contain significantly fewer trainable parameters than $\mathrm W$. LoRA yields strong results for fine-tuning a pre-trained weight matrix, however, the objective landscape for pre-training from scratch is too complex to be modeled by a single $r$-rank subspace alone \cite{lialin2023relorahighranktraininglowrank}.
Previous works \cite{huh2024trainingneuralnetworksscratch} have empirically shown that for training models from scratch with LoRA with rank $r = \min(n,m)$ equaling the maximum rank possible of the matrix can match the performance of full training from scratch. Since the $\Delta \mathrm{W}$ rank in LoRA is always bounded by $r$, it can never utilize the full rank potential of $\mathrm W$. To address this limitation, ReLoRA \cite{lialin2023relorahighranktraininglowrank} extends LoRA by accumulating low-rank updates over multiple steps:
\begin{align*}
  \mathrm{\Delta W} = \alpha \sum_{n=1}^N \mathrm{B}_n \mathrm{A}_n
\end{align*}
where $\mathrm{B}$ and $\mathrm{A}$ are low-rank matrices. During training, $\mathrm{B}$ and $\mathrm{A}$ are optimized, and their low-rank matrices are periodically merged back into the original weight matrix $\mathrm{W}$. Following each merge, \(\mathrm{B}\) and \(\mathrm{A}\) are reinitialized, allowing the accumulated update \(\Delta \mathrm{W}\) to surpass the rank constraint \(r\) after multiple merge cycles. Despite this improvement, the rank of updates at \textbf{any single step} remains confined to a low-rank subspace. Also, for the duration of $\Delta t$ between two merge cycles, the update is restricted to only a rank-$r$ update. This limits its ability to learn as efficiently as a fully parameterized model, as the update rank for a full parameterized model for any given timestep can be high-rank and results in weaker performance compared to full parameter pre-training ~\cite{lialin2023relorahighranktraininglowrank}. Additionally, merging less frequently gives better performance empirically, since the Adam EMAs are better estimates as training progresses \cite{lialin2023relorahighranktraininglowrank,huh2024trainingneuralnetworksscratch}.

\begin{figure*}[h!]
  \centering
  \includegraphics[width=1\linewidth]{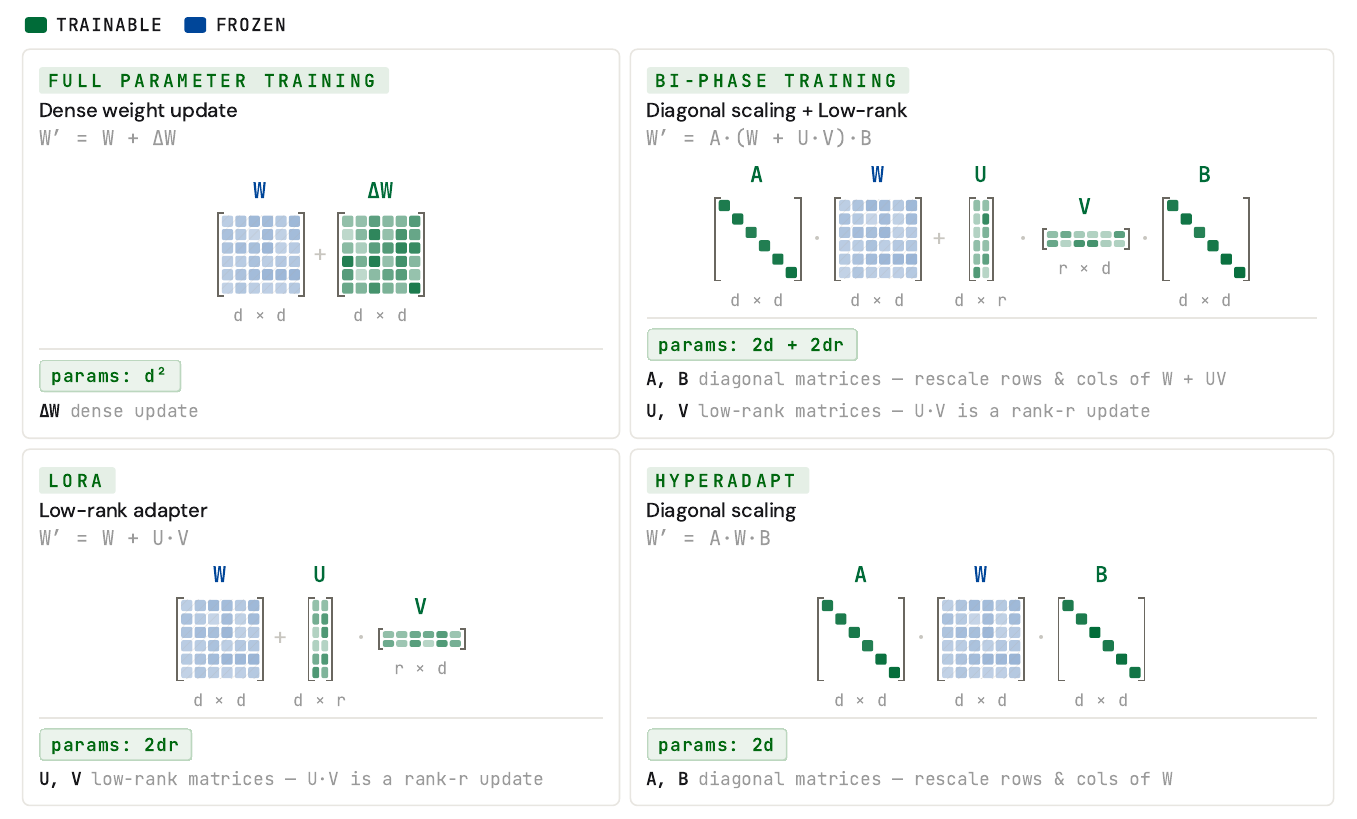}
  \caption{Overview of parameter-efficient training techniques. Blue represents non-trainable parameters, and green represents trainable parameters. $\mathrm{UV}$ is the product of two low-rank matrices of rank-$r$, and $\mathrm{A}$ and $\mathrm{B}$ are diagonal matrices.}
  \label{fig:active_params_overview}
\end{figure*}

In a related work, HyperAdapt~\cite{gurung2025hyperadaptsimplehighrankadaptation} proposed a parameter-efficient high-rank adaptation method by defining the update as: $\Delta \mathrm{W} = \mathrm{A}\mathrm{W}\mathrm{B} - \mathrm{W}$
where \(\mathrm{A} \in \mathbb{R}^{n \times n}\) and \(\mathrm{B} \in \mathbb{R}^{m \times m}\) are trainable diagonal matrices, and \(\mathrm{W}\) is non-trainable parameter. Since both trainable matrices are diagonal, only ${n} + {m}$ number of parameters are needed to train them. Here the rank of $\Delta \mathrm{W}$ is upper-bounded by the $2\cdot \mathrm{rank(W)}$. However, this method assumes that pre-trained weight matrices already contain relevant orthogonal directions which can be scaled relevant to the downstream fine-tuning task, strictly limiting its application to fine-tuning pre-trained models.

\textbf{Empirical rank.} We train a three-layer ReLU MLP on CIFAR-10 \cite{torralba2008tinyimages} and track the second-layer update rank, $\mathrm{rank}(W_t-W_0)$, across batch sizes and training steps; results are shown in \autoref{fig:rank_plot_adam}. After one update, Adam-full and BPT produce consistently high-rank updates across batch sizes, while SGD-full follows its batch-size-dependent rank bound and ReLoRA remains rank-limited by construction. This suggests that BPT’s diagonal matrices can efficiently modulate the rank-one directions of the base weight matrix. By the end of training, all methods except ReLoRA reach near-full-rank updates regardless of batch size, with BPT closely matching full-parameterized training. For batch size 16, SGD-full begins with low-rank updates but reaches full rank within a few steps, indicating that high-rank updates emerge quickly during full training.

\section{Our Method}

Since a fully parameterized model can express high-rank updates at any step $t$, as shown in \autoref{fig:rank_plot_adam}, we seek an update $\Delta \mathrm{W}$ whose rank is comparably high while minimizing total number of trainable parameter to express $\Delta \mathrm{W}$. To do this, we introduce Bi-Phase Training (BPT), a parameter-efficient method designed to induce high-rank updates within individual weight matrices using diagonal matrices and build low-rank subspace using low-rank matrices.

\paragraph{Bi-Phase Training}

Our method conceptualizes the optimization of a weight matrix $\mathrm W \in \mathbb{R}^{n \times m}$ not as a direct update to all $n \times m$ parameters, but as an update structure $\Delta\mathrm {W} $ that combines transformations applied through efficient high-rank diagonal matrices with exploration guided by low-rank matrices. This allows us to approximate the expressiveness of full parameter updates while drastically reducing the trainable parameter count. Formally, for a weight matrix $\mathrm W_0$, the update $\Delta \mathrm  W$ is defined as:

\begin{equation}
  \label{eq:modified}
  \Delta \mathrm{W} = \mathrm{A}\left(\mathrm{W_0} + \sum_{t=1}^{T}\mathrm{U}\mathrm{V}\right)\mathrm{B} - \mathrm{W_0}
\end{equation}
where \(\mathrm{A} \in \mathbb{R}^{n \times n}\) and \(\mathrm{B} \in \mathbb{R}^{m \times m}\) are diagonal trainable matrices, while \(\mathrm{U} \in \mathbb{R}^{n \times r}\) and \(\mathrm{V} \in \mathbb{R}^{r \times m}\) are low-rank trainable matrices. Importantly, $\mathrm{W_0}$ is a non-trainable parameter during training.
At a given time $t$, we only keep a pair of $\mathrm{U}$ and $\mathrm{V}$ trainable and periodically merge $\mathrm{U}$ and $\mathrm{V}$ to $\mathrm W $. So that means for a given step $t$, we have:
\begin{equation}
  \label{eq:time_t_step}
  \Delta \mathrm{W} = \mathrm{A}\left(\mathrm{W_0} + \mathrm{U}\mathrm{V}\right)\mathrm{B} - \mathrm{W_0}
\end{equation}
At any given time $t$, the update in \autoref{eq:time_t_step} is not restricted to rank $r$ in the same way as a pure low-rank adapter. The lemma below provides an expressivity upper bound, $\operatorname{rank}(\Delta \mathrm{W}) \le \min(n,m,2R+r)$, showing that BPT's update rank is not inherently limited by the rank of $\mathrm{U}\mathrm{V}$. Intuitively, the low-rank matrices $\mathrm{U}$ and $\mathrm{V}$ find and accumulate relevant orthogonal directions within a lower-dimensional subspace, while the diagonal matrices $\mathrm{A}$ and $\mathrm{B}$ scale relevant orthogonal directions, thus effectively guiding the high-rank updates.
\begin{mdframed}[
    backgroundcolor=yellow!20,
    linecolor=Goldenrod,
    roundcorner=4pt,
    linewidth=0.8pt,
    innerleftmargin=10pt, innerrightmargin=10pt,
    innertopmargin=10pt, innerbottommargin=10pt,
    skipabove=6pt, skipbelow=6pt
  ]
  \begin{lemma}
    The upper bound for the rank of the update $\Delta \mathrm{W} = \mathrm{A}(\mathrm{W} + \mathrm{U}\mathrm{V})\mathrm{B} - \mathrm{W}$ is $2R + r$, where $R = \operatorname{rank}(\mathrm{W})$ and $r = \operatorname{rank}(\mathrm{U}\mathrm{V})$.
  \end{lemma}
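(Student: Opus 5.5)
The plan is to expand the update algebraically and bound each piece with subadditivity of rank, $\operatorname{rank}(X+Y)\le\operatorname{rank}(X)+\operatorname{rank}(Y)$, together with $\operatorname{rank}(XY)\le\min(\operatorname{rank}X,\operatorname{rank}Y)$. Distributing the diagonal matrices gives
\begin{equation*}
\Delta\mathrm{W} = \left(\mathrm{A}\mathrm{W}\mathrm{B} - \mathrm{W}\right) + \mathrm{A}\mathrm{U}\mathrm{V}\mathrm{B}.
\end{equation*}
I would handle the two summands separately and then add the bounds.

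For the second summand, the product rule gives directly $\operatorname{rank}(\mathrm{A}\mathrm{U}\mathrm{V}\mathrm{B})\le\operatorname{rank}(\mathrm{U}\mathrm{V})=r$. Multiplying by $\mathrm{A}$ and $\mathrm{B}$ on either side cannot increase rank, whether or not they are invertible.

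For the first summand, which is exactly the HyperAdapt update, I would reproduce the bound quoted from \cite{gurung2025hyperadaptsimplehighrankadaptation}: $\operatorname{rank}(\mathrm{A}\mathrm{W}\mathrm{B}-\mathrm{W})\le 2R$. Bounding $\operatorname{rank}(\mathrm{A}\mathrm{W}\mathrm{B})+\operatorname{rank}(\mathrm{W})$ also gives $2R$, but I would present a cleaner decomposition instead. Write
\begin{equation*}
\mathrm{A}\mathrm{W}\mathrm{B}-\mathrm{W} = (\mathrm{A}-\mathrm{I})\mathrm{W}\mathrm{B} + \mathrm{W}(\mathrm{B}-\mathrm{I}).
\end{equation*}
Each term is a product containing $\mathrm{W}$, so each has rank at most $R$. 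This makes clear why the bound is $2R$ rather than $R$: the left and right scalings perturb the column space and the row space of $\mathrm{W}$ in different ways. Either route gives $2R$.

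Adding the two bounds yields $\operatorname{rank}(\Delta\mathrm{W})\le 2R+r$. Since $\Delta\mathrm{W}\in\mathbb{R}^{n\times m}$, this combines with the trivial bound to give $\min(n,m,2R+r)$, the form used in the text.

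There is no real obstacle, since everything reduces to elementary rank inequalities. The only point needing care is to use the split above instead of the naive expansion $\operatorname{rank}(\mathrm{A}(\mathrm{W}+\mathrm{U}\mathrm{V})\mathrm{B})+\operatorname{rank}(\mathrm{W})$. That expansion would give only $(R+r)+R$, which is the same number but obscures the structure. It would also be worth remarking that the bound is an upper bound only: for instance, $\mathrm{A}=\mathrm{B}=\mathrm{I}$ collapses the update to rank at most $r$.
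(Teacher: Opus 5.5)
Your argument is correct. It uses exactly the same two facts as the paper: subadditivity of rank, and $\operatorname{rank}(\mathrm{X}\mathrm{Y}) \le \min(\operatorname{rank}\mathrm{X}, \operatorname{rank}\mathrm{Y})$. The difference lies only in the decomposition.

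The paper expands into the three terms $\mathrm{A}\mathrm{W}\mathrm{B} + \mathrm{A}\mathrm{U}\mathrm{V}\mathrm{B} - \mathrm{W}$ and charges $R$, $r$, $R$ to them. That is essentially the ``naive'' route you set aside. So the care you call for is presentational, not logical: the paper's split already gives $2R+r$ in one line.

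Your rewriting $\mathrm{A}\mathrm{W}\mathrm{B} - \mathrm{W} = (\mathrm{A}-\mathrm{I})\mathrm{W}\mathrm{B} + \mathrm{W}(\mathrm{B}-\mathrm{I})$ does buy something the paper's split cannot. Each term carries a factor $\mathrm{A}-\mathrm{I}$ or $\mathrm{B}-\mathrm{I}$, so you actually obtain $\operatorname{rank}(\Delta\mathrm{W}) \le \min\bigl(R, \operatorname{rank}(\mathrm{A}-\mathrm{I})\bigr) + \min\bigl(R, \operatorname{rank}(\mathrm{B}-\mathrm{I})\bigr) + r$. For diagonal $\mathrm{A}, \mathrm{B}$, the ranks $\operatorname{rank}(\mathrm{A}-\mathrm{I})$ and $\operatorname{rank}(\mathrm{B}-\mathrm{I})$ are just the number of diagonal entries different from $1$. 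This refined bound tracks how far the diagonals have moved from identity. It yields rank at most $r$ when $\mathrm{A}=\mathrm{B}=\mathrm{I}$, matching your closing remark, whereas the paper's bound still reads $2R+r$ there.

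The paper's route is shorter and suffices for the stated $2R+r$. Neither argument uses that $\mathrm{A}$ and $\mathrm{B}$ are diagonal; both hold for arbitrary square $\mathrm{A}$, $\mathrm{B}$.
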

\end{mdframed}

The proof is in the appendix \autoref{lem:high_rank_proof}.

\subsection{Mechanism of Learning}
\label{subsec:mechanism}

BPT combines two complementary mechanisms: low-rank exploration and diagonal reweighting. Let the frozen base matrix be decomposed into rank-one directions,
\[
  \mathrm{W}_0 = \sum_{i=1}^{R} \mathbf{a}_i \mathbf{b}_i^\top .
\]
The diagonal matrices $\trainable{\mathrm{A}}$ and $\trainable{\mathrm{B}}$ rescale these existing directions:
\[
  \trainable{\mathrm{A}}\mathrm{W}_0\trainable{\mathrm{B}}
  =
  \sum_{i=1}^{R}
  \trainable{\mathrm{A}}\mathbf{a}_i \mathbf{b}_i^\top \trainable{\mathrm{B}} .
\]
However, diagonal scaling alone cannot create new directions. The low-rank term $\trainable{\mathrm{U}}\trainable{\mathrm{V}}$ provides this exploration:
\[
  \trainable{\mathrm{A}}
  (\mathrm{W}_0+\trainable{\mathrm{U}}\trainable{\mathrm{V}})
  \trainable{\mathrm{B}}
  =
  \sum_{i=1}^{R}
  \trainable{\mathrm{A}}\mathbf{a}_i \mathbf{b}_i^\top \trainable{\mathrm{B}}
  +
  \sum_{j=1}^{r}
  \trainable{\mathrm{A}}\mathbf{u}_j \mathbf{v}_j^\top \trainable{\mathrm{B}} .
\]
Thus, $\trainable{\mathrm{U}}$ and $\trainable{\mathrm{V}}$ discover new low-rank directions, while $\trainable{\mathrm{A}}$ and $\trainable{\mathrm{B}}$ immediately modulate both the original and newly discovered directions. Periodically, BPT merges the learned low-rank update into the base matrix,
\begin{equation}
  \label{eq:merge}
  \mathrm{W}_{t+1} := \mathrm{W}_t + \trainable{\mathrm{U}}_t\trainable{\mathrm{V}}_t ,
\end{equation}
and reinitializes $\trainable{\mathrm{U}}$ and $\trainable{\mathrm{V}}$ to explore new subspaces. Since the diagonal matrices are not merged or reinitialized, they continue to rescale all accumulated directions. After multiple merge cycles, BPT therefore compounds learning: low-rank factors add new directions to $\mathrm{W}$, while the diagonals continuously scale both old and new directions. This differs from ReLoRA-style training, where each update remains confined to a rank-$r$ subspace between merge steps. In BPT, the current low-rank factors explore new directions while the persistent diagonal matrices simultaneously reshape the accumulated base, allowing high-rank behavior at each step.

\subsection{Initialization and Merging}
\label{sec:init}
\paragraph{Initialization}
We initialize the fixed base weight $\mathrm{W}$ with Kaiming initialization \cite{he2015delvingdeeprectifierssurpassing}. The diagonal matrices $\mathrm{A}$ and $\mathrm{B}$ are initialized to one, so they initially act as identities. For the low-rank factors, $\mathrm{U}$ is initialized semi-orthogonally and $\mathrm{V}$ is initialized to zero. Thus $\mathrm{UV}=0$, and the initial forward pass matches the Kaiming-initialized base model.

\paragraph{Merging}

During training, BPT periodically adds the learned low-rank update into the frozen base weight using \autoref{eq:merge}. This preserves the directions already discovered while freeing the low-rank factors to explore a new subspace. After each merge, we reset $\mathrm U$ and $\mathrm V$ as
$$
\mathrm U := \text{init.orthogonal} \quad , \quad \mathrm V := \text{init.zero}
$$
Because $\mathrm{UV}=0$ immediately after reset, the effective matrix is unchanged at the merge point. The fresh semi-orthogonal $\mathrm U$ provides rank-$r$ directions for the next exploration phase. We do not merge the diagonal matrices, since doing so would not create new subspace directions.

Reinitializing the low-rank factors makes their optimizer exponential moving averages (EMAs) stale, whereas the diagonal EMAs remain valid because the diagonals are not reset. This helps stabilize training and avoids sudden loss spikes after merging.

\begin{figure*}[h]
  \centering
  \includegraphics[width=\linewidth]{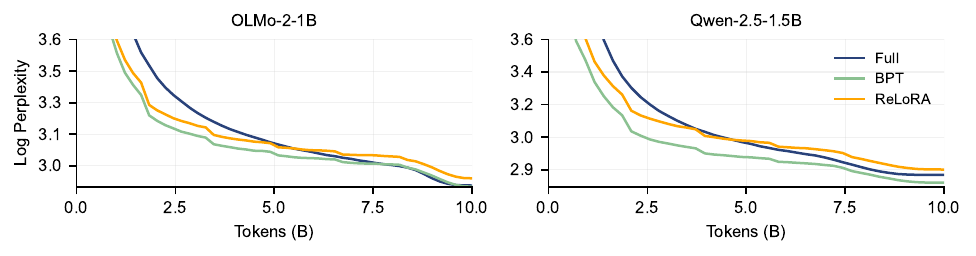}
  \caption{Eval loss over training steps. Left: OLMo-2-1B. Right: Qwen-2.5-1.5B. BPT tracks the Full baseline closely and outperforms ReLoRA over time, despite using substantially fewer trainable parameters.}
  \label{fig:1.5B_loss_qwen_olmo}
\end{figure*}

\subsection{Parameter and Compute Efficiency}

The primary advantage of BPT lies in its significant reduction in the number of trainable parameters compared to training the full weight matrix $\mathrm{W}$. For a single layer with weight matrix $\mathrm{W} \in \mathbb{R}^{n \times m}$, the trainable parameters in BPT are located only in the diagonal matrices $\mathrm{A}$ and $\mathrm{B}$, and the low-rank matrices $\mathrm{U}$ and $\mathrm{V}$. The total number of trainable parameters for one such layer in BPT is $n + m$ (from diagonal $\mathrm{A}$ and $\mathrm{B}$) plus $n \times r + r \times m$ (from low-rank $\mathrm{U}$ and $\mathrm{V}$). In contrast, training the full weight matrix requires $n \times m$ parameters. Since $r \ll \min(n, m)$, the total trainable parameters in BPT are substantially less than $nm$.
$$
\underbrace{n + m}_{\text{Diagonal Matrices}} + \underbrace{nr + rm}_{\text{Low-Rank Matrices}} \ll \underbrace{nm}_{\text{Full Matrix}}
$$
This parameter reduction directly translates into lower memory overhead during training with efficient kernel implementation. This trainable parameter count reduces the fixed memory cost associated with training neural networks as stated in \autoref{sec:intro}. Additionally, BPT requires approximately two-thirds of the linear-layer FLOPs of full training; we provide the full FLOP analysis in \autoref{sec:flop_analysis}.

\section{Related Work}

While the past few years have witnessed remarkable progress in reducing the memory footprint for fine-tuning large language models, memory-efficient pre-training remains considerably less explored. GaLore~\cite{zhao2024galorememoryefficientllmtraining} addresses this gap by proposing a novel approach to parameter-efficient training that projects gradients into a low-rank subspace, diverging from traditional methods that directly parameterize low-rank weight matrices. However, GaLore's reliance on Singular Value Decomposition (SVD) to identify the optimal low-rank approximation presents a significant computational bottleneck. The computational cost of SVD scales poorly with the matrix's dimensions, since SVD has $O(n^3)$ time complexity, making GaLore difficult to scale.

LoRA-the-explorer (LTE) \cite{huh2024trainingneuralnetworksscratch} extends LoRA to pre-training by exploring the low-rank solution space in parallel across different computing nodes. While LTE's parallel search strategy presents a seemingly promising avenue to mitigate the computational bottlenecks associated with full-rank pre-training, it unfortunately inherits the same fundamental drawback: at each step, optimization is limited to a low-rank subspace similar to other low-rank methods \autoref{background}.

Other LoRA variants further improve parameter-efficient adaptation but remain primarily designed for fine-tuning pretrained models. DoRA \cite{liu2024doraweightdecomposedlowrankadaptation} decomposes each pretrained weight into magnitude and direction components, using low-rank weights for directional updates while learning the magnitude separately. VeRA \cite{kopiczko2024veravectorbasedrandommatrix} instead freezes shared random low-rank matrices and trains only lightweight scaling vectors, greatly reducing adapter parameters. However, both approaches still rely on constrained low-rank directions or fixed random projections, and therefore do not directly address the need for repeatedly expanding high-rank update directions during from-scratch pre-training.

\begin{table*}[t]
  \centering
  \scriptsize
  \addtolength{\tabcolsep}{-1.8pt}
  \resizebox{\textwidth}{!}{
  \begin{tabular}{l|l|r|cc|ccccccccc|c|c}
    \toprule
    Model & Method & Trainable
    & Wiki & LAMB & LAMB
    & Hella & WinoG & PIQA & BoolQ
    & SciQ & OBQA & ARC-E & ARC-C
    & \textbf{Avg} & SWDE \\
    & & Params
    & ppl $\downarrow$ & ppl $\downarrow$ & acc $\uparrow$
    & acc $\uparrow$ & acc $\uparrow$ & acc $\uparrow$ & acc $\uparrow$
    & acc $\uparrow$ & acc $\uparrow$ & acc $\uparrow$ & acc $\uparrow$
    & & acc $\uparrow$ \\
    \midrule
    \multicolumn{16}{c}{{10 Billion Tokens}}\\
    \midrule
    & Full   & 1.5B & 2.0 & 47.9 & 30.9 & 33.1 & 51.3 & 66.9 & 59.9 & 81.1 & 23.6 & 59.2 & 23.0 & 47.7 & 17.7 \\
    Qwen-2.5-1.5B & ReLoRA & 0.5B & 5.0 & 42.8 & 32.2 & 33.1 & 50.2 & 66.5 & 60.6 & 82.6 & 23.2 & 58.8 & 24.5 & 48.0 & 15.0 \\
    \rowcolor{lightblue}
    & BPT    & 0.5B & 4.8 & 34.6 & 34.6 & 34.7 & 52.1 & 66.8 & 57.2 & 81.2 & 23.0 & 61.9 & 27.3 & 48.8 & 15.3 \\
    \midrule
    & Full   & 1.5B & 1.9 & 48.2 & 30.6 & 32.8 & 51.1 & 66.6 & 57.9 & 82.8 & 23.2 & 58.9 & 23.5 & 47.5 & 14.4 \\
    OLMo-2-1B & ReLoRA & 0.6B & 2.0 & 47.9 & 30.9 & 32.8 & 52.2 & 66.5 & 57.9 & 83.7 & 23.4 & 58.8 & 25.4 & 48.0 & 16.5 \\
    \rowcolor{lightblue}
    & BPT    & 0.6B & 2.0 & 40.7 & 33.0 & 33.1 & 52.6 & 67.7 & 60.7 & 82.2 & 23.6 & 59.9 & 25.7 & 48.7 & 19.5 \\
    \midrule
    \multicolumn{16}{c}{{100 Billion Tokens}}\\
    \midrule
    & Full & 1.5B & 3.4 & 18.1 & 42.8 & 41.4 & 53.5 & 71.6 & 59.7 & 87.5 & 28.8 & 69.2 & 34.0 & 54.3 & 23.9 \\
    Qwen-2.5-1.5B  & ReLoRA & 0.5B & 3.9 & 23.5 & 38.7 & 37.6 & 53.3 & 68.4 & 58.6 & 84.4 & 25.4 & 65.2 & 30.6 & 51.3 & 21.4 \\
    \rowcolor{lightblue}
    & BPT & 0.5B & 3.7 & 21.4 & 39.2 & 39.2 & 53.9 & 70.6 & 62.0 & 87.5 & 26.2 & 66.1 & 30.6 & 52.8 & 19.7 \\
    \bottomrule
  \end{tabular}
  }
  \caption{Performance across Language Modeling, Commonsense, QA, and Extraction Tasks: lower is better for perplexity; higher is better for accuracy.}
  \label{tab:all_metrics}
\end{table*}

\section{Empirical Experiments}

To evaluate the effectiveness of our proposed method, we conducted a series of pre-training experiments on language models of varying scales and assessed their performance on a diverse suite of downstream tasks. Our primary goal was to demonstrate that BPT can come close to the performance of standard full-parameter pre-training while requiring significantly fewer trainable parameters, thereby lowering the computational barrier to developing large foundation models.

All models were pre-trained from \textbf{scratch} using the standard next-token prediction objective. We used a 10 and 100 billion token subset of the FineWeb-Edu dataset \cite{penedo2024finewebdatasetsdecantingweb}. Our baselines for comparison are standard full-parameter pre-training (referred to as ``Full''), where all model parameters are trainable, and ReLoRA. Importantly, all methods use the same underlying model architecture and the same total number of parameters; the difference lies solely in the number of trainable parameters during training. We train three scales of models based on the Qwen2.5 architecture: 100M, 0.5B, and 1.5B parameters. Additionally, we also pre-train OLMo-2-1B \cite{olmo20242olmo2furious} on the 10B tokens of the same dataset. The detailed specifications of the model architectures are provided in \autoref{sec:model_architecture}. We also summarize the parameter reduction achieved through BPT in \autoref{tab:param_reduction}. For more details regarding hyperparameters and layers targeted, refer to \autoref{sec:hparams_and_setup}. As the model scales, BPT uses a smaller fraction of trainable parameters while remaining close to full-parameter performance, since layers like the embedding layer do not have an outsized effect on larger models

\begin{figure}[h]
  \centering
  \includegraphics[width=1\linewidth]{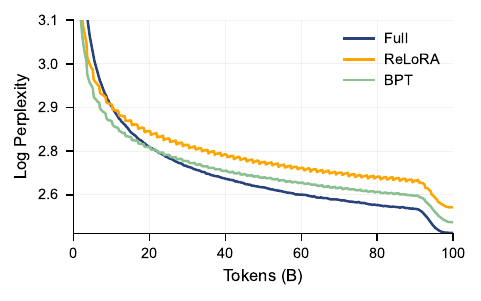}
  \caption{Qwen-2.5-1.5B: Validation Loss vs. Training Steps (Full vs. BPT) for 100B tokens}
\end{figure}

\subsection{Pre-training Performance}

Our pre-training experiments at 10 billion tokens demonstrate BPT achieves validation perplexity close to full-parameter training across different model scales while achieving substantial reductions in trainable parameters. \autoref{fig:log_log_scale} summarizes our experimental results across different model sizes. The x-axis represents the trainable parameters, not including embedding parameters. We show that, for a given model size, BPT achieves log-perplexity performance on a validation set similar to that of the full parameterized model, with significantly fewer trainable parameters.

Figure \ref{fig:1.5B_loss_qwen_olmo} compares the evaluation log perplexity on the validation set for the Qwen-2.5-1.5B  and OLMo-2 between the Full, ReLoRA, and BPT. We show that even with \textbf{66\% less trainable parameters}, BPT achieves validation loss close to full-parameter training. The log-perplexity of validation data shows that the model does not simply overfit the training data and has a similar evaluation loss profile as the baseline. Both BPT and ReLoRA have almost an identical number of trainable parameters. ReLoRA's eval loss slowly diverges from BPT's, even though both methods use a similar high learning rate.

\begin{figure*}[h!]
  \centering
  \includegraphics[width=\linewidth]{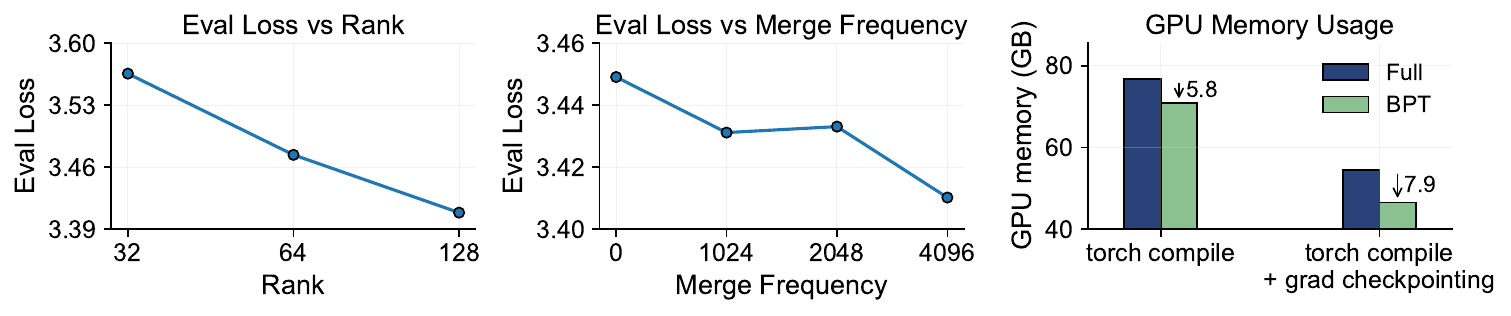}
  \caption{BPT ablations and memory summary. Left/Middle: rank and merge-frequency ablations on the 100M model trained on 10B tokens. Higher rank for the low-rank matrices leads to lower validation loss. Less frequent merges also lead to lower validation loss. Right: Peak GPU memory for Qwen-2.5-1.5B. BPT saves 5.8 GB to train a 1.5B model.}
  \label{fig:ablation_gpu}
\end{figure*}

\paragraph{Longer Training (100 Billion Tokens)}

To further assess the effectiveness of BPT, we pre-train Qwen-2.5-1.5B model on 100 billion tokens from FineWeb-Edu using the same setup as our main experiments. The final evaluation difference between BPT and Full is \textbf{0.0271 log perplexity} with 66\% fewer trainable parameters, compared with a 0.0656 gap between ReLoRA and Full.

\subsection{Downstream Evaluation}
The training loss and evaluation loss indicate strong performance for our proposed methods. To further evaluate whether the parameter efficiency of BPT impacts the model's ability to generalize to unseen tasks, we perform downstream evaluations on our Qwen-2.5-1.5B and OLMo-2-1B models. For all our evaluations, we do not further finetune any additional dataset and report zero-shot performance. Evaluations were performed using the LM evaluation harness \cite{eval_harness}. Following prior work at a similar model and data scale \cite{allenzhu2025physicslanguagemodels41, yang2025parallelizinglineartransformersdelta}, we evaluate commonsense and reasoning benchmarks including HellaSwag \cite{zellers2019hellaswagmachinereallyfinish}, WinoGrande \cite{sakaguchi2019winograndeadversarialwinogradschema}, PIQA \cite{bisk2019piqareasoningphysicalcommonsense}, BoolQ \cite{clark2019boolqexploringsurprisingdifficulty}, SciQ \cite{welbl2017crowdsourcingmultiplechoicescience}, OpenBookQA \cite{mihaylov2018suitarmorconductelectricity}, and ARC-Easy/Challenge \cite{clark2018thinksolvedquestionanswering}. We also report perplexity on held-out text using WikiText\cite{merity2016pointersentinelmixturemodels} and LAMBADA \cite{paperno2016lambadadatasetwordprediction}, and evaluate generation using SWDE \cite{lockard-etal-2019-openceres}. For Wikitext, we report byte perplexity.
\autoref{tab:all_metrics} summarizes our results. At 10B tokens, BPT is competitive with Full on average in our benchmark suite, while at 100B tokens Full performs better on average. This suggests that BPT preserves much, but not all, downstream performance under reduced trainable-parameter training.

\subsection{Comparison of PEFT Methods}

\begin{table}[h!]
  \centering
  \caption{Comparison of PEFT Methods: Trainable Parameter and Eval Loss. Models are pre-trained from scratch on 10 billion tokens.}
  \label{tab:peft_base}
  \vspace{-2pt}
  \begin{tabular}{l|r|l}
    \toprule
    Method & Trainable & Eval \\
    & Param     & Loss $\downarrow$ \\
    \midrule
    HyperAdapt & 137M & 5.486 \\
    VeRA       & 137M & 4.640 \\
    LoRA       & 206M & 3.051 \\
    DoRA       & 206M & 3.102 \\
    ReLoRA     & 206M & 3.038 \\
    \rowcolor{lightblue}
    BPT        & 206M & 3.014 \\
    \midrule
    Full       & 494M & 3.022 \\
    \bottomrule
  \end{tabular}
\end{table}

Since BPT builds on prior works such as LoRA, ReLoRA, and HyperAdapt, we can evaluate the performance of each of these components by directly pre-training using these methods and comparing against BPT. For this ablation study, we pre-train the Qwen-2.5-0.5B model in the same setup as our main experiments with 10 billion tokens and report their evaluation loss along with the trainable parameters. This study also allows us to see the effects of only training low-rank matrices $\mathrm{U}\mathrm{V}$ without merging (LoRA), the effect of training low-rank matrices $\mathrm{U}\mathrm{V}$ with merging (ReLoRA), and the effect of training only the diagonal matrices $\mathrm{A}$ and $\mathrm{B}$ (HyperAdapt).

The result is summarized in \autoref{tab:peft_base}. HyperAdapt, which only uses diagonal matrices to update $\mathrm{W}$, performs poorly since the base weight matrix $\mathrm{W}$ does not contain any relevant subspace that can be adjusted, since $\mathrm{W}$ at initialization is just a random matrix. LoRA without periodic rank accumulation performs better than HyperAdapt, and the difference between LoRA's evaluation loss and ReLoRA's evaluation loss is only 0.01. BPT and Full both have evaluation losses that are similar compared to other methods. Additionally, to exactly match the number of trainable parameters in ReLoRA and BPT, we conducted an additional experiment using ReLoRA with rank (r+1=129). Under the same training setup, rank-128 ReLoRA achieved a final evaluation loss of 3.038, while rank-129 ReLoRA achieved 3.033. In comparison, rank-128 BPT achieved a lower evaluation loss of 3.014. Thus, increasing the ReLoRA rank to obtain an exact parameter-matched comparison yields only a small improvement and does not account for the performance advantage of BPT.

\subsection{Rank and Merge Frequency}

To better understand the role of rank and merge frequency in BPT, we ablate both under the same setting as our main results, using the 100M model trained on 10B tokens. As shown in \autoref{fig:ablation_gpu}, increasing rank improves evaluation loss with diminishing returns, while less frequent merges generally perform better by allowing Adam to estimate its moments more stably between resets. Detailed GPU memory measurements are provided in Appendix~\ref{sec:gpu_memory_footprint}.

\section{Conclusion}
In this paper, we introduced Bi-Phase Training (BPT), a parameter-efficient pre-training method that combines diagonal high-rank transformations with periodically merged low-rank adapters. Across language-model pre-training from 100M to 1.5B parameters, BPT approaches full-training validation loss and preserves much of its downstream performance, while optimizing substantially fewer trainable parameters. This suggests that BPT can be a practical route to more efficient foundation-model pre-training.

\section*{Limitations}

Our experiments are limited to autoregressive language-model pre-training models up to 1.5B parameters. Future work should test BPT on larger models, longer training runs, other architectures and also different modalities such as diffusion.

\section*{Acknowledgments}
This work used the DeltaAI system at the National Center for Supercomputing Applications through allocation CIS250267 from the Advanced Cyberinfrastructure Coordination Ecosystem: Services and Support (ACCESS) program, which is supported by National Science Foundation grants 2138259, 2138286, 2138307, 2137603, and 2138296.

\bibliography{custom}

\appendix
\onecolumn
\section{Momentum and Adam's Update Rank}
\label{sec:adams_update_rank}

For SGD with momentum, the update rank at time $t$ is upper-bounded by $tB$ since momentum is a linear combination of past gradients. In modern optimizers such as Adam \cite{kingma2017adammethodstochasticoptimization}, where first and second exponential moving averages (EMAs) are stored and transformed, the rank of the update is no longer bounded just by the batch size but depends on the rank of its first and second EMAs. Here we derive a loose upper-bound for Adam.

For Adam, let $G_t \in \mathbb{R}^{n\times m}$ denote the mini-batch gradient at step $t$ (for batch size $B$, we have $\operatorname{rank}(G_t) \le \min(n,m,B)$ as above). Adam maintains exponential moving averages of the first and second moments:
\begin{align*}
  M_t &= \beta_1 M_{t-1} + (1-\beta_1)G_t, \\
  V_t &= \beta_2 V_{t-1} + (1-\beta_2)(G_t \odot G_t),
\end{align*}
where $\odot$ denotes the element-wise product. The parameter update is
$$
\Delta W_t = -\alpha  \frac{M_t}{\sqrt{V_t}+\varepsilon} =  -\alpha  M_t \odot  \frac{1}{\sqrt{V_t}+\varepsilon}
$$
where the division and square-root are applied element-wise. Using Hadamard product

Since $M_t$ is a linear combination of past gradients,
\[
  M_t = (1-\beta_1)\sum_{k=1}^t \beta_1^{t-k} G_k,
\]
and using $\operatorname{rank}(\sum_{k=1}^{t} G_k)\le \sum_{k=1}^{t} \operatorname{rank}(G_k)$, we obtain an upper-bound
\[
  \operatorname{rank}(M_t)
  \le \sum_{k=1}^t \operatorname{rank}(G_k)
  \le tB,
\]
hence $\operatorname{rank}(M_t)\le \min(n,m,tB)$. Using the Hadamard-rank inequality: for any $A,B\in\mathbb{R}^{n\times m}$,
\[
  \operatorname{rank}(A\odot B) \le \operatorname{rank}(A)\,\operatorname{rank}(B).
\]
Applying this to $\Delta W_t$ yields
\[
  \operatorname{rank}(\Delta W_t)
  = \operatorname{rank}(M_t\odot \frac{1}{\sqrt{V_t}+\varepsilon})
  \le \operatorname{rank}(M_t)\,\operatorname{rank}(\frac{1}{\sqrt{V_t}+\varepsilon})
\]
Hence, for Adam, the rank of its $\Delta W_t$ is upper-bounded by the product of its first and second moments' ranks. Here, we do not provide the rank of the second moment primarily because performing element-wise square-root on a matrix leads to full rank in some cases, even for rank-one matrices.

\section{BPT Rank Proof}
\label{lem:high_rank_proof}

\begin{lemma}
  The upper bound for the rank of the update $\Delta \mathrm{W} = \mathrm{A}(\mathrm{W} + \mathrm{U}\mathrm{V})\mathrm{B} - \mathrm{W}$ is $2R + r$, where $R = \operatorname{rank}(\mathrm{W})$ and $r = \operatorname{rank}(\mathrm{U}\mathrm{V})$.
\end{lemma}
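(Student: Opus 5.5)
Expanding the update gives
\[
\Delta \mathrm{W} = \mathrm{A}\mathrm{W}\mathrm{B} - \mathrm{W} + \mathrm{A}\mathrm{U}\mathrm{V}\mathrm{B}.
\]
We then bound the rank of each piece and combine the bounds with subadditivity, $\operatorname{rank}(X+Y)\le \operatorname{rank}(X)+\operatorname{rank}(Y)$, which is the same inequality already used in the Adam analysis of \autoref{sec:adams_update_rank}. The claimed total $2R+r$ suggests the split
\[
\operatorname{rank}(\Delta\mathrm{W}) \le \operatorname{rank}(\mathrm{A}\mathrm{W}\mathrm{B}) + \operatorname{rank}(\mathrm{W}) + \operatorname{rank}(\mathrm{A}\mathrm{U}\mathrm{V}\mathrm{B}),
\]
in which the first two terms are each at most $R$ and the last is at most $r$.

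For the individual terms we use the product inequality $\operatorname{rank}(XY)\le\min(\operatorname{rank}X,\operatorname{rank}Y)$ quoted in \autoref{background}. It gives $\operatorname{rank}(\mathrm{A}\mathrm{W}\mathrm{B})\le\operatorname{rank}(\mathrm{W})=R$ and $\operatorname{rank}(\mathrm{A}\mathrm{U}\mathrm{V}\mathrm{B})\le\operatorname{rank}(\mathrm{U}\mathrm{V})=r$. This holds for arbitrary diagonal $\mathrm{A},\mathrm{B}$, including singular ones, because multiplying on either side can never increase rank. The remaining term is $\operatorname{rank}(-\mathrm{W})=R$. Summing the three bounds gives $2R+r$.

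Finally, we note that $\Delta\mathrm{W}\in\mathbb{R}^{n\times m}$, so its rank is also at most $\min(n,m)$. This yields the form $\min(n,m,2R+r)$ stated in the main text.

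There is no real obstacle; the argument is elementary. The one point that needs care is how to group the terms. Grouping as $(\mathrm{A}\mathrm{W}\mathrm{B}-\mathrm{W}) + \mathrm{A}\mathrm{U}\mathrm{V}\mathrm{B}$ and bounding the bracket by $2R$ is the right choice. Trying to merge $\mathrm{W}+\mathrm{U}\mathrm{V}$ first would give $\operatorname{rank}(\mathrm{A}(\mathrm{W}+\mathrm{U}\mathrm{V})\mathrm{B})\le R+r$ and hence the same total $2R+r$, so either route works. The bound is only an upper bound, and no tightness claim is needed.
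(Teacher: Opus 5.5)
Your proof is correct and follows essentially the same route as the paper: expand into $\mathrm{A}\mathrm{W}\mathrm{B} + \mathrm{A}\mathrm{U}\mathrm{V}\mathrm{B} - \mathrm{W}$, apply subadditivity, bound the sandwiched terms by $R$ and $r$ since multiplication cannot increase rank, add $\operatorname{rank}(-\mathrm{W}) = R$, and finally cap by $\min(n,m)$. Your side remark that grouping as $\mathrm{A}(\mathrm{W}+\mathrm{U}\mathrm{V})\mathrm{B} - \mathrm{W}$ yields the same total $2R+r$ is also correct.
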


\begin{proof}
  We have
  \[
    \Delta \mathrm{W}
    = \mathrm{A}(\mathrm{W} + \mathrm{U}\mathrm{V})\mathrm{B} - \mathrm{W}
    = \mathrm{A}\mathrm{W}\mathrm{B} + \mathrm{A}\mathrm{U}\mathrm{V}\mathrm{B} - \mathrm{W}.
  \]
  For all conformable matrices $\mathrm{X}$ and $\mathrm{Y}$,
  $\operatorname{rank}(\mathrm{X} + \mathrm{Y})
  \le \operatorname{rank}(\mathrm{X}) + \operatorname{rank}(\mathrm{Y})$.
  Therefore,
  \[
    \operatorname{rank}\!\bigl(\mathrm{A}\mathrm{W}\mathrm{B} + \mathrm{A}\mathrm{U}\mathrm{V}\mathrm{B} - \mathrm{W}\bigr)
    \le \operatorname{rank}(\mathrm{A}\mathrm{W}\mathrm{B})
    + \operatorname{rank}(\mathrm{A}\mathrm{U}\mathrm{V}\mathrm{B})
    + \operatorname{rank}(-\mathrm{W}).
  \]
  Since matrix multiplication cannot increase rank,
  $\operatorname{rank}(\mathrm{A}\mathrm{X}\mathrm{B}) \le \operatorname{rank}(\mathrm{X})$ for any $\mathrm{X}$, and hence
  \[
    \operatorname{rank}\!\bigl(\mathrm{A}\mathrm{W}\mathrm{B} + \mathrm{A}\mathrm{U}\mathrm{V}\mathrm{B} - \mathrm{W}\bigr)
    \le \operatorname{rank}(\mathrm{W})
    + \operatorname{rank}(\mathrm{U}\mathrm{V})
    + \operatorname{rank}(-\mathrm{W}).
  \]
  We have $\operatorname{rank}(\mathrm{U}\mathrm{V}) = r$ and
  $\operatorname{rank}(\mathrm{W}) = \operatorname{rank}(-\mathrm{W}) = R$, so
  \[
    \operatorname{rank}\!\bigl(\Delta \mathrm{W}\bigr)
    \le R + r + R = 2R + r.
  \]
  Hence the rank of the update $\Delta \mathrm{W}$ is upper-bounded by $2R + r$. Trivially, it is also bounded by its dimensions so $\operatorname{rank} (\Delta \mathrm{W}) \leq \min(n,m, 2R+r)$
\end{proof}

\section{FLOP Analysis}
\label{sec:flop_analysis}
For FLOP analysis, we use the same setup as \cite{schulman2025lora}. In full training, for a weight matrix
$\mathrm{W} \in \mathbb{R}^{N \times N}$, an input $x \in \mathbb{R}^N$, and an output
$y \in \mathbb{R}^N$, we have
\begin{align*}
  \text{Forward:}\qquad
  y &= \mathrm{W}x
  && \left(N^2 \text{ multiply-adds}\right),
  \\[0.5em]
  \text{Backward:}\qquad
  \bar{x} &= \mathrm{W}^\top \bar{y}
  && \left(N^2 \text{ multiply-adds}\right),
  \\
  \bar{\mathrm{W}} &\mathrel{+}= \bar{y}x^\top
  && \left(N^2 \text{ multiply-adds}\right).
\end{align*}

Here, $\bar{\mathrm{W}}$, $\bar{y}$, and $\bar{x}$ denote gradients of the loss with
respect to $\mathrm{W}$, $y$, and $x$, respectively. Combining the forward pass
$\left(N^2\right)$ and backward pass $\left(2N^2\right)$ gives
\[
  \mathrm{FLOPs}_{\mathrm{Full}} = 3N^2.
\]

In BPT, we replace $\mathrm{W}$ by
$\mathrm{A}(\mathrm{W} + \mathrm{U}\mathrm{V})\mathrm{B}$, where
$\mathrm{A}, \mathrm{B} \in \mathbb{R}^{N \times N}$ are diagonal matrices and
$\mathrm{U} \in \mathbb{R}^{N \times R}$,
$\mathrm{V} \in \mathbb{R}^{R \times N}$ are low-rank matrices with $R \ll N$.
Since we only update $\bar{\mathrm{A}}$, $\bar{\mathrm{B}}$,
$\bar{\mathrm{U}}$, and $\bar{\mathrm{V}}$, the update of $\bar{\mathrm{W}}$
is replaced by cheaper operations.

The FLOP estimate for BPT is therefore
\begin{align*}
  \text{Forward:}\qquad
  y &= \mathrm{A}(\mathrm{W} + \mathrm{U}\mathrm{V})\mathrm{B}x
  && \left(N^2 + 2NR + 2N \text{ multiply-adds}\right),
  \\[0.75em]
  \text{Backward:}\qquad
  \bar{x}
  &= \mathrm{B}^\top(\mathrm{W} + \mathrm{U}\mathrm{V})^\top
  \mathrm{A}^\top \bar{y}
  && \left(N^2 + 2NR + 2N \text{ multiply-adds}\right),
  \\
  \bar{\mathrm{U}}
  &\mathrel{+}= (\mathrm{A}^\top \bar{y})(\mathrm{V}\mathrm{B}x)^\top
  && \left(NR \text{ multiply-adds}\right),
  \\
  \bar{\mathrm{V}}
  &\mathrel{+}= (\mathrm{U}^\top \mathrm{A}^\top \bar{y})(\mathrm{B}x)^\top
  && \left(NR \text{ multiply-adds}\right),
  \\
  \bar{\mathrm{A}}
  &\mathrel{+}= \bar{y} \odot
  \bigl((\mathrm{W} + \mathrm{U}\mathrm{V})\mathrm{B}x\bigr)
  && \left(N \text{ multiply-adds}\right),
  \\
  \bar{\mathrm{B}}
  &\mathrel{+}=
  \bigl((\mathrm{W} + \mathrm{U}\mathrm{V})^\top
  \mathrm{A}^\top \bar{y}\bigr) \odot x
  && \left(N \text{ multiply-adds}\right).
\end{align*}

Thus, combining the BPT forward pass
$\left(N^2 + 2NR + 2N\right)$ and backward pass
$\left(N^2 + 4NR + 4N\right)$ gives
\[
  \mathrm{FLOPs}_{\mathrm{BPT}}
  =
  2N^2 + 6NR + 6N.
\]

This is close to the FLOP estimate for LoRA \cite{schulman2025lora},
which is
\[
  \mathrm{FLOPs}_{\mathrm{LoRA}}
  =
  2N^2 + 6NR.
\]
BPT incurs an additional $6N$ cost due to the diagonal factors. Compared to full training, which costs $3N^2$ per layer, BPT uses
\begin{align*}
  \frac{\mathrm{FLOPs}_{\mathrm{BPT}}}
  {\mathrm{FLOPs}_{\mathrm{Full}}}
  &=
  \frac{2N^2 + 6NR + 6N}{3N^2}
  =
  \frac{2}{3}
  +
  \frac{2R}{N}
  +
  \frac{2}{N}.
\end{align*}

When $R \ll N$, this is only slightly larger than $2/3$, meaning that BPT uses roughly two-thirds of the compute of full training. This analysis follows the same setup as \cite{schulman2025lora} and considers only FLOPs in linear layers, which account for the majority of FLOPs during training.

\section{Model Architecture and Parameter Reduction}
\label{sec:model_architecture}

All models are based on the Qwen2.5 architecture \cite{qwen2025qwen25technicalreport}: in addition to the standard 0.5B and 1.5B parameter versions, we developed a custom 100M parameter variant by scaling down the same architecture. We also pre-train OLMo-2-1B \cite{olmo20242olmo2furious}. Although both models are transformer-based auto-regressive language models, they differ slightly in their attention implementation (Grouped Query Attention vs Multi-head Attention). Additionally, Qwen-2.5 ties its input and output embeddings, whereas OLMo-2 has separate input and output embeddings. For OLMo-2, we directly use the official implementation from the authors.

\begin{table}[h!]
  \centering
  \caption{Model Architecture Detail and Total Training Tokens}
  \begin{tabular}{lccccc}
    \toprule
    \textbf{Models} & \textbf{Layers} & \textbf{Heads (Q / KV)} & \textbf{Hidden Size} & \textbf{Intermediate Size} & \textbf{Training Tokens} \\
    \midrule
    100M   & 8  & 8/ 2  & 512   & 2048  & 10B   \\
    0.5B  & 24 & 14 / 2 & 896   & 4864  & 10B   \\
    1.5B    & 36 & 16 / 2 & 2048  & 11008 & 10B and 100B  \\
    \bottomrule
  \end{tabular}
  \label{tab:qwen2.5_models}
\end{table}

\begin{table*}
  \centering
  \caption{Trainable parameter counts by method and model size}
  \begin{tabular}{l|rr |rr}
    \toprule
    \textbf{Model} & \textbf{Full} & \textbf{BPT} & \textbf{Total} & \textbf{Non-embedding} \\
    & \textbf{Trainable} & \textbf{Trainable} & \textbf{Reduction (\%)} & \textbf{Reduction (\%)} \\
    \midrule
    Small   & 108M & 89M & 17.8 & 63.5 \\
    Medium  & 0.5B & 0.2B & 58.2 & 80.4 \\
    Large   & 1.5B & 0.5B & 65.9 & 77.4 \\
    \bottomrule
  \end{tabular}
  \label{tab:param_reduction}
\end{table*}

\section{Hyperparameters and Experimental Setup}
\label{sec:hparams_and_setup}
For BPT and other baselines, we applied this method exclusively to the linear layers within the transformer blocks. Parameters such as layer normalization, embedding layers, and (lm\_head) were kept fully trainable. For the Qwen2.5 architecture, attention matrices, such as the query projection layer \texttt{q\_head}, have dimensions 518 by 128. Applying BPT to such layers would increase the number of trainable parameters if the low-rank matrices' rank is set to 128, undermining the parameter efficiency goal for these specific matrices. Hence, we leave them unchanged.
We did not conduct extensive hyperparameter search for learning rate. Following the trend of PEFT papers, where learning rate is higher compared to full fine-tuning. We simply used 10x of the base learning rate which was 2e-4 for all our base models. For ReLoRA with Qwen 1.5B model and LoRA with Qwen 0.5B model, we used a learning rate of 1e-3 after multiple crashes with learning rate 2e-3. Additionally, all the parameters in the model were trained using bf16 including the low-rank matrices with the exception of the diagonal matrices which were kept in fp32 for stability. For a fair comparison, all experiments for a given model size used the same batch size and were trained for the same total number of steps. The optimizer used was AdamW \cite{loshchilov2019decoupledweightdecayregularization}, and the learning rate followed a warm-up and cosine decay schedule (WSD).

\begin{table}[h!]
  \footnotesize
  \addtolength{\tabcolsep}{-1pt}
  \centering
  \caption{The hyperparameters used for pre-training Qwen-2.5 Models.}
  \begin{tabular}{ll|cccc}
    \hline
    \toprule
    Method  & Models     &  100M & 0.5B & 1.5B\\
    \midrule
    & Optimizer   & \multicolumn{3}{c}{AdamW} \\
    & Warmup Steps & \multicolumn{3}{c}{2000} \\
    & Decay Steps & \multicolumn{3}{c}{10000,8000,8000} \\
    & Max Grad Norm & \multicolumn{3}{c}{1.0} \\
    & Max Seq. Len & \multicolumn{3}{c}{512} \\
    & Batch Size & \multicolumn{3}{c}{320,512,1024} \\
    & LR Schedule & \multicolumn{3}{c}{WSD} \\
    & Tokens & \multicolumn{3}{c}{10B} \\
    \midrule
    \multirow{1}{*}{\makecell{Full}} \
    & Learning Rate & \multicolumn{3}{c}{2e-4} \\
    \midrule
    \multirow{1}{*}{\makecell{BPT}} \
    & Learning Rate & \multicolumn{3}{c}{2e-3} \\
    & Rank & \multicolumn{3}{c}{128,128,256} \\
    \midrule
    \multirow{1}{*}{\makecell{ReLoRA}} \
    & Learning Rate & \multicolumn{3}{c}{2e-3,1e-3} \\
    & Rank & \multicolumn{3}{c}{,128,256} \\
    \midrule
    \multirow{1}{*}{\makecell{LoRA}} \
    & Learning Rate & \multicolumn{3}{c}{1e-3} \\
    & Rank & \multicolumn{3}{c}{128} \\
    \midrule
    \multirow{1}{*}{\makecell{HyperAdapt}} \
    & Learning Rate & \multicolumn{3}{c}{2e-3} \\
    \bottomrule
  \end{tabular}
\end{table}
\begin{table}[h!]
  \footnotesize
  \addtolength{\tabcolsep}{-1pt}
  \centering
  \caption{The hyperparameters used for pre-training OLMo-2-1B.}
  \begin{tabular}{ll|cccc}
    \hline
    \toprule
    Method  & Models     &   & 1B & \\
    \midrule
    & Optimizer   & \multicolumn{3}{c}{AdamW} \\
    & Warmup Steps & \multicolumn{3}{c}{2000} \\
    & Decay Steps & \multicolumn{3}{c}{8000} \\
    & Max Grad Norm & \multicolumn{3}{c}{1.0} \\
    & Max Seq. Len & \multicolumn{3}{c}{512} \\
    & Batch Size & \multicolumn{3}{c}{960} \\
    & LR Schedule & \multicolumn{3}{c}{WSD} \\
    & Tokens & \multicolumn{3}{c}{10B} \\
    \midrule
    \multirow{1}{*}{\makecell{Full}} \
    & Learning Rate & \multicolumn{3}{c}{2e-4} \\
    \midrule
    \multirow{1}{*}{\makecell{BPT}} \
    & Learning Rate & \multicolumn{3}{c}{2e-3} \\
    & Rank & \multicolumn{3}{c}{256} \\
    \midrule
    \multirow{1}{*}{\makecell{ReLoRA}} \
    & Learning Rate & \multicolumn{3}{c}{2e-3} \\
    & Rank & \multicolumn{3}{c}{256} \\
    \bottomrule
  \end{tabular}
\end{table}

\begin{table}[h!]
  \footnotesize
  \addtolength{\tabcolsep}{-1pt}
  \centering
  \caption{The hyperparameters used for pre-training Qwen-2.5 1.5B on 100 billion tokens.}
  \begin{tabular}{ll|c}
    \hline
    \toprule
    Method  & Models     & 1.5B\\
    \midrule
    & Optimizer      & AdamW \\
    & Warmup Steps   & 2000 \\
    & Decay Steps    & 24,000  \\
    & Max Grad Norm  & 1.0 \\
    & Max Seq. Len   & 512 \\
    & Batch Size     & 1024 \\
    & LR Schedule    & WSD \\
    & Tokens         & 100B \\
    \midrule
    \multirow{1}{*}{\makecell{Full}}
    & Learning Rate  & 2e-4 \\
    \midrule
    \multirow{1}{*}{\makecell{BPT}}
    & Learning Rate  & 2e-3 \\
    & Rank           & 256 \\
    \bottomrule
  \end{tabular}
\end{table}

\section{GPU Memory Footprint}
\label{sec:gpu_memory_footprint}

On a 4-GPU NVIDIA GH200 setup, we evaluate the memory consumption of Qwen-2.5-1.5B at approximately 8192 tokens per GPU (16 microbatches $\times$ 512 sequence length). Compared to full training, BPT reduces peak GPU memory by 5.8 GB per GPU with \texttt{torch.compile} and by 7.9 GB per GPU with \texttt{torch.compile} plus gradient checkpointing, as summarized in \autoref{fig:ablation_gpu}. These memory savings come with roughly $7\%$ wall-clock overhead under \texttt{torch.compile}; because this uses a pure PyTorch implementation without custom kernels, the overhead is likely an upper bound.

\end{document}